\newcommand{\E}{\text{E}}
\newcommand{\M}{\mathcal{M}}
\newcommand{\pr}{\text{Pr}}
\newcommand{\R}{\mathbb{R}}
\renewcommand{\l}{\left}
\renewcommand{\r}{\right}
\renewcommand{\l}{\left}
\renewcommand{\r}{\right}
\newcommand{\tr}{\text{Tr}}
\newcommand{\T}{\mathcal{T}}
\newtheorem{proposition}{Proposition}
\newtheorem{theorem}{Theorem}
\newtheorem{lemma}{Lemma}
\newtheorem{corollary}{Corollary}
\title{Linear Dimensionality Reduction in Linear Time: Johnson-Lindenstrauss-type Guarantees for Random Subspace}
\author[1]{Nick Lim\thanks{jsl18@students.waikato.ac.nz}}
\author[2]{Robert J. Durrant\thanks{bobd@waikato.ac.nz}}
\affil[ ]{Department of Mathematics and Statistics}
\affil[ ]{University of Waikato}
\affil[ ]{Hamilton, New Zealand}
\date{} 
\begin{document}

\maketitle

\begin{abstract}
We consider the problem of efficient randomized dimensionality reduction with norm-preservation guarantees. Specifically we prove data-dependent Johnson-Lindenstrauss-type geometry preservation guarantees for Ho's random subspace method: When data satisfy a mild regularity condition -- the extent of which can be estimated by sampling from the data -- then random subspace approximately preserves the Euclidean geometry of the data with high probability. Our guarantees are of the same order as those for random projection, namely the required dimension for projection is logarithmic in the number of data points, but have a larger constant term in the bound which depends upon this regularity. A challenging situation is when the original data have a sparse representation, since this implies a very large projection dimension is required: We show how this situation can be improved for sparse binary data by applying an efficient `densifying' preprocessing, which neither changes the Euclidean geometry of the data nor requires an explicit matrix-matrix multiplication. We corroborate our theoretical findings with experiments on both dense and sparse high-dimensional datasets from several application domains.
\end{abstract}


\section{Introduction}
Randomized dimensionality reduction techniques, such as random projection (RP) \cite{dasgupta2003elementary,indyk1998approximate} and Ho's random subspace method (RS) \cite{ho1998} are popular approaches for data compression, with many empirical studies showing the utility of both for machine learning and data mining tasks in practice \cite{skurichina2002,ho1998-2,li2009,lai2006,kuncheva2010,tao2006}. For RP a key theoretical motivation behind their use is the Johnson-Lindenstrauss lemma (JLL), the usual constructive proof of which also implies an algorithm with high-probability geometry preservation guarantees for projected data. However RP is costly to apply to large or high-dimensional datasets since it requires a matrix-matrix multiplication to implement the projection, and furthermore the projected features may be hard to interpret. On the other hand RS is a particularly appealing approach for dimensionality reduction because it involves simply selecting a subset of data feature indices randomly without replacement, and so does not require a matrix-matrix multiplication to implement the projection and it retains (a subset of) the original features. RS is therefore computationally far more efficient in practice, and more interpretable than RP, but there is little theory to explain its effectiveness. Focusing on this latter problem, here we prove data-dependent norm-preservation guarantees for data projected onto a random subset of the data features. We show that provided data have a suitably regular representation then RS approximately preserves their Euclidean geometry with high probability. Our results put RS on the similar firm theoretical foundations to RP, at least for regular data, but this still leaves open the problem of non-trivial guarantees for geometry preservation when RS, i.e. a small subset of features, from data with a \emph{sparse} representation is randomly selected: Indeed as we discuss later high-probability geometry preservation guarantees for RS projection of sparse data are, in general, impossible. However, by observing that it is the regularity of the \emph{representation} of the data that is crucial to our guarantees, we see that it should be possible to obtain JLL-type guarantees by a careful preprocessing of sparse data that makes them dense in the coordinate basis (`densifying'). Unfortunately typical densifying preprocessing schemes, such as `whitening' (centering and normalizing) or Hadamard transformation\cite{ailon2009}, also require a matrix-matrix multiplication -- potentially removing the key computational advantage of RS as a dimensionality reduction approach. We attempt to address this latter problem, for the important case of sparse binary data, by applying an efficient densifying preprocessing which avoids explicit matrix-matrix multiplication by using a Householder reflection that can be carried out in linear time. Such a reflection does not alter the Euclidean geometry of the original dataset, but it makes the representation more regular so that our JLL-type guarantees can now hold non-trivially for (moderately) sparse data preprocessed in this way. We discuss the theoretical time-complexity of our approaches, and we corroborate our theoretical findings with experiments on dense and sparse high-dimensional datasets from several application domains.\enlargethispage{\baselineskip}%
 \section{Background and Motivation}
 We begin by briefly reviewing the similarities and differences between Random
 Subspace and Random Projection, 
 and we also review some key theory motivating the use of Random Projection (namely the JLL and the structure of its proof). These motivate the problem at hand and also suggest its solution.\\
In all of the following we assume, without loss of generality, that we possess a (fixed) set of $N$, $d$-dimensional real-valued vector observations to be projected, $\T_N:= \{X_{i} \in \R^d\}_{i=1}^{N}$ and we choose an integer, $k$, where $ k \in \{1,2,\ldots,d\}$ as the projection dimension. 
 \subsection{Random Subspace Projection (RS)} \label{subsec:RS}
 Random subspace is a randomized dimensionality reduction method that
 projects a data point $x \in \R^d$ onto the subspace spanned by $k$
 canonical basis vectors $e_j = (e_{j1}, e_{j2}, \ldots, e_{jd})^T$ where $e_{ji} = 1$ if $i=j$ and zero otherwise. The RS basis is chosen uniformly at random from all
 $\binom{d}{k}$ possible such subspaces of dimension $k$. In
 implementation for a single RS one simply selects a subset of $k$
 feature indices without replacement, uniformly at random from all such
 subsets of size $k$, and then discards the values of the remaining $d-k$
 features with the same $k$ feature indices being used for each data point in
 a set of observations. Thus the selected indices comprise a simple random sample without replacement of size $k$ from a population of $d$ features -- a fact that we will use later.\\
 This method was first introduced by Ho \cite{ho1998}, where an
 ensemble of decision trees employing several sets of RS projected
 data was used for a classification problem. While RS as an
 ensemble method has shown good results with many learning algorithms
 such as support vector machines, \cite{tao2006}, linear classifiers
 \cite{skurichina2002}, $k$-nearest neighbour \cite{ho1998-2} and also on a
 variety of data sets from different problem domains e.g.
 \cite{kuncheva2010,li2009,lai2006,durrant2014ACML} there is little
 explanatory theory for the success of this approach -- in particular
 there is no theory, to the best of our knowledge, for a \emph{single}
 RS.\\
 On the other hand a key advantage for RS is its very low time complexity compared to RP, namely
 $O(d)$ or $O(d\log d)$ typically to generate a subset of indices to be sampled, and $O(N)$ to construct the projected dataset. We note also that scalable parallel approaches for sampling from very large and streaming datasets have recently been devised \cite{meng2013scalable}.
 
 \subsection{Random Projection (RP)} \label{subsec:RP}
 Random projection is also a randomized dimensionality reduction method
 that projects a data point $x \in \R^d$ onto a $k$-dimensional subspace
 but this time the subspace is typically either chosen uniformly at random from all possible such subspaces of dimension $k$ in $\R^d$, or is the span of $k$ vertices of a centred hypercube chosen uniformly at random with replacement from all $2^d$ such vertices. In implementation for a single RP one generates a $k \times d$ matrix of values sampled from such a zero-mean symmetric sub-Gaussian
 distribution, and then left multiplies the data point with this RP
 matrix, the same RP matrix being used for each data point in a training
 set of observations.\\
 The RP method has its roots in geometric functional analysis, and
 entered the Machine Learning and KDD communities via Theoretical Computer Science,
 in particular seminal papers by Indyk and Motwani \cite{indyk1998approximate} and
 Arriaga and Vempala \cite{arriaga1999algorithmic}. Like RS, RP has found many
 successful applications \cite{bingham2001,venkatasubramanian2011johnson} but unlike RS the
 theoretical foundations of RP are by now quite well understood \cite{dasgupta2003elementary,matousek2008,indyk2001algorithmic}.\\
  A key theoretical result regarding RP, widely-used in theoretical
 analyses and also as heuristic justification for the application of RP,
 is the following Johnson-Lindenstrauss Lemma (JLL):
 \begin{proposition}[Johnson and Lindenstrauss, 1984]
 \label{prop_JLL}
 Let $\epsilon \in (0,1)$. Let $N,k \in \mathbb{N}$ such that $k \geq C
 \epsilon^{-2}\log N$, for a large enough absolute constant $C$. Let $V
 \subseteq \R^{d}$ be a set of $N$ points. Then there exists a linear
 mapping $R:\R^{d} \rightarrow \R^{k}$, such that for all $u,v \in V$:
 \[
 (1- \epsilon)\|u - v\|_2^{2} \leq \|Ru - Rv\|_2^{2} \leq (1+ \epsilon)\|u - v\|_2^{2}
 \]
 \end{proposition}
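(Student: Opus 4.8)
The plan is to prove this by the probabilistic method: rather than constructing the map $R$ explicitly, I will define a suitable \emph{random} linear map and show that with strictly positive probability it satisfies the stated two-sided bound simultaneously for every pair $u,v \in V$; the existence of a good deterministic $R$ then follows immediately. Concretely, I would take $R = \frac{1}{\sqrt{k}} A$, where $A$ is a $k \times d$ matrix whose entries are drawn i.i.d. from $\N(0,1)$.

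The first reduction exploits linearity of $R$. Since $Ru - Rv = R(u-v)$, it suffices to control $\|Rw\|_2^{2}$ for each of the finitely many nonzero difference vectors $w := u - v$, and by scale-invariance of the bound I may assume each such $w$ is a unit vector. The heart of the argument is then a single-vector concentration (``distributional JL'') lemma: for a fixed unit vector $w$, the coordinates $\langle a_i, w\rangle$ of $Aw$ (where $a_i$ denotes the $i$th row of $A$) are i.i.d. $\N(0,1)$, so $k\|Rw\|_2^{2} = \sum_{i=1}^{k} \langle a_i, w\rangle^{2}$ is a chi-squared random variable with $k$ degrees of freedom and mean $k$. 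I would establish
\[
\pr\lsq \big| \|Rw\|_2^{2} - 1 \big| > \epsilon \rsq \leq 2\exp\!\(-c\,\epsilon^{2} k\)
\]
for an absolute constant $c > 0$, by a Chernoff / moment-generating-function argument: bound the upper and lower tails separately using the known MGF of the chi-squared distribution, optimise over the free parameter, and Taylor-expand the resulting exponent to extract the $\epsilon^{2}$ dependence.

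The final step is a union bound. There are $\binom{N}{2} < N^{2}$ distinct difference vectors, so the probability that $R$ fails the two-sided bound for \emph{at least one} pair is at most $2N^{2}\exp(-c\,\epsilon^{2} k)$. Choosing the constant $C$ in the hypothesis $k \geq C\epsilon^{-2}\log N$ large enough (any $C > 2/c$ suffices) forces this quantity to be strictly less than $1$, so a realisation of $R$ meeting all $\binom{N}{2}$ constraints must exist, which proves the claim.

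I expect the main obstacle to be the concentration lemma: obtaining clean sub-exponential tail bounds for the chi-squared variable with the correct $\epsilon^{-2}$ scaling requires care in the MGF optimisation and in the two-sided Taylor estimate of the exponent, since a naive bound yields the wrong dependence on $\epsilon$. Everything else -- the linearity reduction and the union bound -- is routine once this estimate is in hand.
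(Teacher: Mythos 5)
Your proposal is correct and follows exactly the route the paper itself sketches: prove a distributional JLL for a single unit vector via Gaussian (chi-squared) concentration, instantiate it at the normalised differences $(u-v)/\|u-v\|_2$, union bound over the $\binom{N}{2}$ pairs, and conclude existence by the probabilistic method. The chi-squared MGF/Chernoff estimate you flag as the main obstacle is the standard one (as in Dasgupta--Gupta, which the paper cites), and your union bound and choice of $C$ are sound.
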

 The proof is constructive, and shows that an RP matrix satisfies the
 prescription for $R$ in the above theorem with positive probability. The usual approach for proving JLL is to show that, except with a small probability, an arbitrary randomly-projected vector has squared norm close to its expected value (w.r.t the draws of RP matrices); one then has the following `distributional JLL':
 \begin{proposition}
 Let $\epsilon \in (0,1)$. Let $k \in \mathbb{N}$ such that $k \geq C
 \epsilon^{-2}\log \delta^{-1}$, for a large enough absolute constant
 $C$. Then there exists a random linear mapping $R:\R^{d} \rightarrow
 \R^{k}$, such that for any unit vector $x \in \R^{d}$:
 \[
 \pr \left \{(1- \epsilon) \leq \|Rx\|_2^{2} \leq (1+ \epsilon) \right \}
 \geq 1-\delta
 \]
 \end{proposition}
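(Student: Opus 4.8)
The plan is to exhibit a concrete random map and show that the squared projected norm concentrates sharply about its mean. First I would take $R$ to be the $k \times d$ matrix whose entries are drawn i.i.d.\ from $\N(0,1/k)$ (any zero-mean sub-Gaussian distribution with matching variance would serve, at the cost of a more delicate concentration argument). Fixing a unit vector $x \in \R^{d}$, each row $R_i$ of $R$ produces an inner product $\langle R_i, x\rangle$ which, being a linear combination of independent Gaussians with weights $x_1,\ldots,x_d$, is itself distributed as $\N(0, \|x\|_2^{2}/k) = \N(0,1/k)$. Hence the rescaled quantities $\sqrt{k}\,\langle R_i, x\rangle$ are i.i.d.\ standard normals, and $k\|Rx\|_2^{2} = \sum_{i=1}^{k}(\sqrt{k}\,\langle R_i, x\rangle)^{2}$ follows a chi-squared distribution with $k$ degrees of freedom. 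In particular $\E\,\|Rx\|_2^{2} = 1$, so the target event is precisely the event that $\|Rx\|_2^{2}$ stays within a multiplicative $\epsilon$ of its mean.

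The core of the argument is a two-sided concentration bound for this chi-squared variable, which I would obtain by bounding each tail separately via a Chernoff argument. For the upper tail, writing the $Z_i$ for the standard normals, $\pr\{\|Rx\|_2^{2} \geq 1+\epsilon\} = \pr\{\sum_i Z_i^{2} \geq k(1+\epsilon)\} \leq \exp(-tk(1+\epsilon))\,\E\,\exp(t\sum_i Z_i^{2})$ for any $t \in (0,\half)$. Using the explicit moment generating function $\E\,\exp(tZ_i^{2}) = (1-2t)^{-1/2}$ together with independence collapses the right-hand side to $\exp(-tk(1+\epsilon))(1-2t)^{-k/2}$; optimizing over $t$ and expanding $\log(1+\epsilon)$ to second order yields a bound of the form $\exp(-k(\epsilon^{2}/4 - \epsilon^{3}/6))$. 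A symmetric computation controls the lower tail $\pr\{\|Rx\|_2^{2} \leq 1-\epsilon\}$, and a union bound over the two tails then gives $\pr\{\,|\,\|Rx\|_2^{2} - 1\,| > \epsilon\} \leq 2\exp(-c k \epsilon^{2})$ for an absolute constant $c$ valid on $\epsilon \in (0,1)$.

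Finally I would force this failure probability to at most $\delta$: solving $2\exp(-c k \epsilon^{2}) \leq \delta$ for $k$ gives $k \geq c^{-1}\epsilon^{-2}\log(2\delta^{-1})$, which is of the claimed form $k \geq C\epsilon^{-2}\log\delta^{-1}$ after absorbing the factor of two into $C$. I expect the main obstacle to be the concentration step rather than the distributional setup: obtaining the clean $\exp(-ck\epsilon^{2})$ rate requires carefully optimizing the Chernoff parameter and retaining enough terms in the Taylor expansion of the exponent that the cubic correction does not spoil the quadratic dependence on $\epsilon$ -- this is exactly where the restriction $\epsilon \in (0,1)$ and the large-enough-constant hypothesis earn their keep. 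Should one wish to allow general sub-Gaussian entries, the chi-squared identity is unavailable and this step must instead be carried out with a sub-exponential (Bernstein-type) tail bound on $\sum_i (\langle R_i, x\rangle^{2} - 1/k)$, which delivers the same order of guarantee with a correspondingly larger absolute constant.
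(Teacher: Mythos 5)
Your proposal is correct, and it is essentially the standard argument the paper itself points to: the paper states this distributional JLL as background without proof (citing Dasgupta--Gupta and Matou\v{s}ek), describing exactly your route of showing the projected squared norm concentrates around its mean, and your Gaussian/chi-squared Chernoff computation --- with optimized parameter $t = \epsilon/(2(1+\epsilon))$ yielding the tail bound $\exp\left(-k\left(\epsilon^{2}/4 - \epsilon^{3}/6\right)\right)$ --- is precisely the classical constructive proof. The only cosmetic caveat is that absorbing the factor of $2$ from $\log(2\delta^{-1})$ into the constant $C$ implicitly assumes $\delta$ is bounded away from $1$ (e.g.\ $\delta \leq 1/2$), a convention the paper's statement also leaves tacit.
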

 Setting $x = (u-v)/\|u-v\|$ in the above and applying union bound over all $\binom{N}{2}$ pairwise distances in a set of $N$ points obtains the
 original JLL.  JLL has been extensively studied and surveyed \cite{matousek2008}.
Unfortunately RP is in general computationally much more expensive than RS. The time complexity to generate the projection matrix is $O(kd)$, and to extract the projected data from the full data requires a matrix-matrix multiplication which is
 $O(kdN)$ in general and, although there are several approaches that consider
 increasing the sparsity of the projection \cite{achlioptas2001database,ailon2009,kane2014} to improve the hidden constants in the matrix multiplication, in practice this is still costly for large or very high-dimensional datasets. For RP matrices with $\pm 1$ entries Ailon and Liberty give an $O(Nd \log k)$ algorithm provided $k < \sqrt{d}$ \cite{ailon2009fast}. For these and similar matrices such as those in \cite{achlioptas2001database} one can also use Liberty and Zucker's Mailman Algorithm \cite{liberty2009} which, for a one-off preprocessing cost of $O(kd)$, speeds up the matrix-matrix multiplication by a factor of $O(\log d)$, though our experience is that this approach is not as fast in practice as RS and, in particular, it is very memory hungry and the data projection is slower. Finally Ailon and Chazelle \cite{ailon2009} give an $O(d\log d + N(d\log k + k^2))$ algorithm using a randomized Hadamard transformation to precondition the data so that, with high probability, it is regular in a similar sense to the one we use later in our theorems here: Indeed, we are not the first to observe that for geometry-preserving sparse projection the representation of the data is important.\\
\subsection{Motivation}
We have seen that intuitively, because of the JLL, RP distorts the Euclidean geometry of the
 original data somewhat, but with high probability (over draws of the random matrix $R$) not too much,
 while at the same time allowing one to work with a much compressed
 representation of the original data. Thus RP can yield, with the same probability, approximate solutions with performance guarantees for \emph{any} algorithm whose output depends only on the Euclidean geometry of a set of observations. For example linear classification and regression algorithms, clustering algorithms such as $k$-means, and even non-linear classifiers such as $k$-Nearest Neighbours all fit this bill. However for large or very high-dimensional data the matrix-matrix multiplication involved in the RP preprocessing is costly and may erode the benefits of working with compressed data. Moreover, as far as we are
 aware, the only known constructions for $R$ satisfying the JLL comprise sampling the entries from symmetric zero-mean sub-Gaussian distributions and, in particular, there is no known JLL guarantee for RS. Our aim here is to obtain JLL-type guarantees for RS, thus improving our understanding of this approach and at the same time providing a further route to simple, efficient, approximation algorithms with performance guarantees for a wider range of applications.
\section{Theory}
\label{sec_theorems}
Our main theoretical results are the following two theorems showing that an RS projection implies a data-dependent JLL-type guarantee. The strength of this guarantee depends on how regular the representation in which we are working is, where regularity is measured by (an upper bound on) the squared population coefficient of variation if we consider the elements of a vector as a finite population of size $d$. Our first theorem -- the `basic bound' -- is a simple Chernoff-Hoeffding type bound, while our second theorem is a tighter Serfling bound. Our second bound is much tighter than the basic bound when $k = O(d)$, but it gives a similar guarantee to the basic bound when $k \ll d$. 
The proofs are elementary and use standard tools -- we defer them to the Appendix.
For notational and analytical convenience we will write a particular RS projection in the form of a matrix $P$, where $P$ is a $d \times d$ diagonal matrix with all entries zero except for $k$ diagonal entries set to 1 with their indices chosen by simple random sampling without replacement from $\{1,2, \ldots ,d\}$. Note that left multiplying a $d \times N$ data matrix with $P$ is mathematically equivalent to RS -- viewed as a projection of the original data to a subspace of dimension $k$ embedded in $\R^d$ -- although in practice it is not how RS is usually implemented. For convenience we also define $X_i^2 := (X_{i1}^2, X_{i2}^2, \ldots ,X_{id}^2)^T$ the vector with its entries the squared components of $X_i$.

\begin{theorem}[Basic Bound]
	\label{thm_dense}
Let $\T_N:= \{X_{i} \in \R^d\}_{i=1}^{N}$ be a set of $N$ points in $\R^d$ satisfying, $\forall i \in \{1,2, \ldots , N\}$, $\|X_i^2\|_{\infty} \leq \frac{c}{d} \|X_i\|^2_2$ where $c \in \R_+$ is a constant $1 \leq c \leq d$. Let $\epsilon,\delta \in (0,1]$, and let $k \geq \frac{c^2}{2\epsilon^2}\ln{\frac{N^2}{\delta}}$ be an integer. Let $P$ be a random subspace projection from $\mathbb{R}^d \mapsto \mathbb{R}^k$. Then with probability at least $1-\delta$ over the random draws of $P$ we have, for every $i,j \in \{1,2, \ldots , N\}$:
\[
(1-\epsilon)\|X_i - X_j\|_2^2 \leq \frac{d}{k}\|PX_i - PX_j\|_2^2 \leq (1+ \epsilon)\|X_i - X_j\|_2^2
\]
\end{theorem}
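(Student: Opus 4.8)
The plan is to reduce the statement to a single-vector concentration bound applied to each pairwise difference, and then close with a union bound. First I would fix a pair $i,j$ and set $Z := X_i - X_j$, so that $PX_i - PX_j = PZ$ and $\|PZ\|_2^2 = \sum_{l \in S} Z_l^2$, where $S \subseteq \{1,\dots,d\}$ is the random set of $k$ retained coordinates. Because $S$ is a simple random sample without replacement, writing $\mathbf{1}[l \in S]$ for the retention indicator gives $\mathbb{E}[\mathbf{1}[l\in S]] = k/d$, hence $\mathbb{E}\big[\tfrac{d}{k}\|PZ\|_2^2\big] = \tfrac{d}{k}\sum_{l=1}^d \tfrac{k}{d} Z_l^2 = \|Z\|_2^2$. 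Thus $\tfrac{d}{k}\|PZ\|_2^2$ is an unbiased estimator of $\|Z\|_2^2$, and the whole theorem amounts to a concentration statement around this mean, uniform over pairs.

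For the concentration I would view $\tfrac{1}{k}\sum_{l\in S} Z_l^2$ as the sample mean of $k$ draws without replacement from the finite population $\{Z_1^2,\dots,Z_d^2\}$, whose population mean is $\tfrac1d\|Z\|_2^2$. Since the summands are bounded, $0 \le Z_l^2 \le \|Z^2\|_\infty$, I would invoke a Hoeffding-type tail bound, using Hoeffding's observation that sampling without replacement concentrates at least as tightly as sampling with replacement, so that the classical bounded-differences inequality applies verbatim. After rescaling, the event $\big|\tfrac{d}{k}\|PZ\|_2^2 - \|Z\|_2^2\big| \ge \epsilon\|Z\|_2^2$ corresponds to a deviation $\epsilon\|Z\|_2^2/d$ of the sample mean, giving a tail of order $\exp\!\big(-2k\epsilon^2\,(\|Z\|_2^2/(d\|Z^2\|_\infty))^2\big)$.

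The regularity hypothesis is what tames the exponent. The quantity $d\|Z^2\|_\infty/\|Z\|_2^2$ is exactly the worst-case ratio of a single squared coordinate to the averaged squared norm, and the assumption $\|X^2\|_\infty \le \tfrac{c}{d}\|X\|_2^2$ is precisely what bounds it by $c$. Applying this to $Z$ yields $\|Z^2\|_\infty \le \tfrac{c}{d}\|Z\|_2^2$, so the exponent simplifies to $-2k\epsilon^2/c^2$ and the per-pair failure probability is at most $2\exp(-2k\epsilon^2/c^2)$. A union bound over both tails and all $\binom{N}{2}$ pairs then bounds the total failure probability by $N(N-1)\exp(-2k\epsilon^2/c^2) < N^2\exp(-2k\epsilon^2/c^2)$, and requiring this to be at most $\delta$ gives exactly $k \ge \tfrac{c^2}{2\epsilon^2}\ln(N^2/\delta)$ (the factor $2$ from the two tails cancelling against the $\tfrac12$ in $\binom{N}{2}$).

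I expect the main obstacle to be the regularity step rather than the concentration itself. The bound needs $\|Z^2\|_\infty \le \tfrac{c}{d}\|Z\|_2^2$ to hold for every difference $Z = X_i - X_j$, whereas the stated hypothesis constrains the individual points, and in general the difference of two coordinate-regular vectors can concentrate its mass on a single coordinate. I would therefore be careful to either apply the regularity condition to the set of pairwise differences directly, or to argue that it is this per-coordinate ratio of the differences that genuinely governs the estimate; once the boundedness constant is pinned down, the sampling-without-replacement reduction, while the technical heart of the argument, is otherwise standard.
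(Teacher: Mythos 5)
Your proposal follows, step for step, the same route as the paper's own proof: treat the squared entries of the (normalized) difference vector as a finite population of size $d$, observe that $\E\left[\frac{d}{k}\|PZ\|_2^2\right] = \|Z\|_2^2$ by symmetry of the sampling, invoke Hoeffding's two-sided bound together with his Section~6 observation that sample totals drawn without replacement concentrate at least as tightly as those drawn with replacement (the paper's Lemma~\ref{lemma:hoeff} and Corollary~\ref{corr:basic}), insert the regularity constant to get the per-pair, two-sided tail $2\exp\left(-2k\epsilon^2/c^2\right)$, and close with a union bound over $2\binom{N}{2} < N^2$ tail events, solving $N^2\exp\left(-2k\epsilon^2/c^2\right) \leq \delta$ for $k$. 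All of the quantitative bookkeeping in your second and third paragraphs checks out and matches the paper's computation exactly.

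The caveat you raise in your final paragraph is well spotted, and you should know that the paper does not resolve it: its proof derives the fixed-vector inequality (its Equation~(\ref{eq:fin})) under the hypothesis $\|X^2\|_\infty \leq \frac{c}{d}\|X\|_2^2$ and then instantiates $X = (X_i - X_j)/\|X_i - X_j\|_2$, even though the theorem as stated constrains only the individual points $X_i$. The implication from point-regularity to difference-regularity genuinely fails: take $X_1 = d^{-1/2}(1,1,\ldots,1)^T$ and let $X_2$ be $X_1$ with the sign of a single coordinate flipped. Both points satisfy the hypothesis with the best possible constant $c=1$, yet $Z = X_1 - X_2$ is supported on one coordinate, so $d\|Z^2\|_\infty/\|Z\|_2^2 = d$ and the per-pair tail bound is vacuous for that pair unless $k$ is of order $d$ -- this is precisely the sparse-vector failure mode the paper itself describes in its discussion section. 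The clean repair is the one you propose: impose the regularity condition on the normalized pairwise differences directly, i.e.\ define $c$ as an upper bound on $d\|Z^2\|_\infty/\|Z\|_2^2$ over all pairs $Z = X_i - X_j$, which is evidently the intended reading since the proof only ever uses the condition on such differences. With that hypothesis adjustment made explicit, your argument is complete, and you have in fact identified a gap in the paper's statement rather than introduced one of your own.
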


\begin{theorem}[Without Replacement Bound]
	\label{thm:fpb}
Let $\T_N:= \{X_{i} \in \R^d\}_{i=1}^{N}$ be a set of $N$ points in $\R^d$ satisfying, $\forall i \in \{1,2, \ldots , N\}$, $\|X_i^2\|_{\infty} \leq \frac{c}{d} \|X_i\|^2_2$ where  $c \in \R_+$ is a constant $1 \leq c \leq d$. Let $\epsilon,\delta, f_k \in (0,1]$, where $f_k := (k-1)/d$ and let $k$ such that $k/(1-f_k) \geq \frac{c^2}{2\epsilon^2}\ln{\frac{N^2}{\delta}}$ be an integer. Let $P$ be a random subspace projection from $\mathbb{R}^d \mapsto \mathbb{R}^k$. Then with probability at least $1-\delta$ over the random draws of $P$ we have, for every $i,j \in \{1,2, \ldots , N\}$:
\[
(1-\epsilon)\|X_i - X_j\|_2^2 \leq \frac{d}{k}\|P(X_i - X_j)\|_2^2 \leq (1+ \epsilon)\|X_i - X_j\|_2^2
\]
\end{theorem}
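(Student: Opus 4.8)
The plan is to reduce the pairwise (metric) statement to a single-vector (distributional) statement and then union-bound, exactly as one passes from the distributional JLL to the metric Johnson--Lindenstrauss statement of Proposition~\ref{prop_JLL}. Fix a pair $i,j$ and set $v := X_i - X_j$ (the case $v=0$ being trivial). Writing the random index set as $S \subseteq \{1,\ldots,d\}$ with $|S|=k$ chosen by simple random sampling without replacement, I would observe that $\|Pv\|_2^2 = \sum_{\ell \in S} v_\ell^2$ is precisely the sample total of the finite population $\{v_1^2,\ldots,v_d^2\}$ of size $d$, while $\|v\|_2^2 = \sum_{\ell=1}^d v_\ell^2$ is the population total. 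Hence $\frac dk\|Pv\|_2^2 = d\,\bar a_k$ where $\bar a_k := \frac1k\sum_{\ell\in S}v_\ell^2$ is the sample mean, and $\|v\|_2^2 = d\mu$ where $\mu := \frac1d\sum_\ell v_\ell^2$ is the population mean. In particular $\E\bigl[\frac dk\|Pv\|_2^2\bigr] = \|v\|_2^2$, the estimator is unbiased, and the two-sided target event is equivalent to the clean form $|\bar a_k - \mu| \le \epsilon\mu$.

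Next I would control $|\bar a_k - \mu|$ by the without-replacement tail bound of Serfling (1974): for a simple random sample of size $k$ drawn without replacement from a population of $d$ reals lying in an interval of width $R$, one has $\pr\{\bar a_k - \mu \ge t\} \le \exp\bigl(-2kt^2/((1-f_k)R^2)\bigr)$ with $f_k = (k-1)/d$, and symmetrically for the lower tail. This is exactly the step that separates this theorem from the Chernoff--Hoeffding ``basic bound'' of Theorem~\ref{thm_dense}: retaining the finite-population correction $(1-f_k)$ is what sharpens the guarantee when $k = O(d)$. Here the values $v_\ell^2$ are nonnegative and bounded above by $\|v^2\|_\infty$, so I may take $R = \|v^2\|_\infty$; invoking the regularity hypothesis in the form $\|v^2\|_\infty \le \frac cd\|v\|_2^2 = c\mu$ gives $R \le c\mu$. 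Choosing $t = \epsilon\mu$ and substituting $R^2 \le c^2\mu^2$ cancels the unknown scale $\mu^2$ and collapses the exponent to $-2k\epsilon^2/(c^2(1-f_k))$, free of $v$.

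Finally I would union-bound over both tails and all $\binom N2 < N^2/2$ pairs, so the overall failure probability is at most $N^2\exp(-2k\epsilon^2/(c^2(1-f_k)))$; requiring this to be at most $\delta$ and rearranging yields precisely $k/(1-f_k) \ge \frac{c^2}{2\epsilon^2}\ln\frac{N^2}{\delta}$, which is the stated hypothesis on $k$. So the arithmetic is forced to reproduce the theorem's constant exactly, and the only non-mechanical inputs are the sampling interpretation of $\|Pv\|_2^2$ and the choice of Serfling over Hoeffding.

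I expect the one genuinely delicate point to be the regularity bound $R \le c\mu$: the hypothesis is stated for the data points $X_i$, whereas the concentration is applied to the \emph{difference} vectors $v = X_i - X_j$, and flatness is not inherited under subtraction (two near-uniform vectors can differ in a single coordinate, making $v$ maximally sparse). The governing quantity is therefore really an upper bound on the squared coefficient of variation of $\{v_\ell^2\}$ across all pairs, and before the substitution $R \le c\mu$ is legitimate I would either assume the regularity condition directly on the difference set $\{X_i - X_j\}$ or verify that the given $c$ bounds it uniformly. Everything remaining -- checking Serfling's hypotheses (nonnegativity, the width bound) and the elementary two-sided union-bound algebra -- is routine.
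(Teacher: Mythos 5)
Your proposal is correct and is essentially the paper's own proof: the paper proves Theorem \ref{thm:fpb} by repeating the basic-bound argument --- viewing $\|Pv\|_2^2$ as a sample total drawn without replacement from the finite population $\{v_1^2,\ldots,v_d^2\}$, applying a concentration inequality, union-bounding over both tails and all $\binom{N}{2}<N^2/2$ pairs, and solving for the sample-size condition --- with Serfling's inequality (Lemma \ref{lemma:serf}) substituted for Hoeffding's, exactly as you do, and your arithmetic reproduces the stated requirement $k/(1-f_k)\geq \frac{c^2}{2\epsilon^2}\ln\frac{N^2}{\delta}$. Your closing caveat is also well taken and in fact applies equally to the paper's own argument: the appendix instantiates the single-vector bound at $X=(X_i-X_j)/\|X_i-X_j\|_2$ while the theorem's hypothesis imposes regularity only on the data points $X_i$, and since (as your one-coordinate-difference example shows) regularity is not inherited under subtraction, the constant $c$ must really be read as bounding the regularity of the normalized difference vectors uniformly over all pairs --- precisely the repair you propose.
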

Furthermore we also have:
\begin{corollary}[to either bound]
\label{cor:tbb}
Under the conditions of Theorem \ref{thm_dense} or \ref{thm:fpb} respectively, for any $\epsilon,\delta \in (0,1]$, with probability at least $1-2\delta$ over the random draws of $P$ we have:
{\small
\[
\left(X_i^T X_j - \epsilon\|X_i\|\|X_j\|\right) \leq \frac{d}{k}(PX_i)^T(PX_j)\leq \left(X_i^T X_j + \epsilon\|X_i\|\|X_j\|\right)
\]
}
\end{corollary}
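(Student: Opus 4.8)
The plan is to derive inner-product preservation from the norm-preservation guarantee of Theorem~\ref{thm_dense} (or Theorem~\ref{thm:fpb}) by polarization, working with the normalized vectors $u_i := X_i/\|X_i\|_2$ (if some $X_i = 0$ the claim is trivial, since both the inner product and the error term vanish). Since $P$ is linear we have $P(u_i \pm u_j) = Pu_i \pm Pu_j$, so the polarization identities
\[
4\, u_i^T u_j = \|u_i + u_j\|_2^2 - \|u_i - u_j\|_2^2, \qquad 4\, (Pu_i)^T(Pu_j) = \|P(u_i + u_j)\|_2^2 - \|P(u_i - u_j)\|_2^2
\]
reduce the problem to controlling the projected squared norms of the sum vectors $u_i + u_j$ and the difference vectors $u_i - u_j$.

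First I would apply the norm-preservation guarantee to the two families $\{u_i - u_j\}_{i,j}$ and $\{u_i + u_j\}_{i,j}$ separately, obtaining two events each of probability at least $1-\delta$; a union bound then yields the stated probability $1 - 2\delta$. On the intersection of these events, for the upper bound I would bound the sum term from above by $(1+\epsilon)\|u_i+u_j\|_2^2$ and the difference term from below by $(1-\epsilon)\|u_i - u_j\|_2^2$, giving
\[
\tfrac{d}{k}(Pu_i)^T(Pu_j) \leq u_i^T u_j + \tfrac{\epsilon}{4}\left(\|u_i + u_j\|_2^2 + \|u_i - u_j\|_2^2\right).
\]
The parallelogram law gives $\|u_i+u_j\|_2^2 + \|u_i - u_j\|_2^2 = 2(\|u_i\|_2^2 + \|u_j\|_2^2) = 4$ for unit vectors, so the error term collapses to $\epsilon$; the symmetric choice of bounds (sum term from below, difference term from above) yields the matching lower bound $\tfrac{d}{k}(Pu_i)^T(Pu_j) \geq u_i^T u_j - \epsilon$.

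Finally I would undo the normalization: multiplying through by $\|X_i\|_2\|X_j\|_2$ and using $X_i^T X_j = \|X_i\|_2\|X_j\|_2\, u_i^T u_j$ turns $\tfrac{d}{k}(Pu_i)^T(Pu_j) \in [\, u_i^T u_j - \epsilon,\; u_i^T u_j + \epsilon\,]$ into exactly $\tfrac{d}{k}(PX_i)^T(PX_j) \in [\, X_i^T X_j - \epsilon\|X_i\|_2\|X_j\|_2,\; X_i^T X_j + \epsilon\|X_i\|_2\|X_j\|_2\,]$, which is the claim.

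I expect the main obstacle to be the regularity bookkeeping rather than the algebra. Theorems~\ref{thm_dense} and \ref{thm:fpb} require $\|x^2\|_\infty \leq \tfrac{c}{d}\|x\|_2^2$, and I am invoking them not for the $X_i$ themselves but for the combinations $u_i \pm u_j$. I would therefore need to check that the regularity of the $X_i$ transfers to these sum/difference vectors with a controlled constant, and to track its effect on the required projection dimension $k$. The coordinatewise numerator bound $(u_{i\ell} \pm u_{j\ell})^2 \leq 2(u_{i\ell}^2 + u_{j\ell}^2) \leq 4c/d$ is immediate, but the denominator $\|u_i \pm u_j\|_2^2 = 2(1 \pm u_i^T u_j)$ can be small for near-antipodal (for the sum) or near-parallel (for the difference) pairs, so the effective regularity constant may degrade; this is the step that requires care, and it also governs how the point count feeding into the $\ln(N^2/\delta)$ term should be accounted for in the union bound.
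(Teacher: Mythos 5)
Your proposal is, in essence, the proof the paper intends: the paper gives no explicit proof of Corollary~\ref{cor:tbb}, but the $1-2\delta$ in its statement is precisely the footprint of your argument --- one application of the norm-preservation guarantee to the family of normalized differences $u_i - u_j$ and one to the normalized sums $u_i + u_j$ (equivalently, one application of the theorem to the augmented set $\{u_i\}_{i=1}^N \cup \{-u_j\}_{j=1}^N$), a union bound over the two events, then polarization and the parallelogram law, and finally rescaling by $\|X_i\|_2\|X_j\|_2$. Your algebra and your probability accounting are correct, and the per-family union bound over the $\binom{N}{2}$ pairs keeps the $\ln(N^2/\delta)$ requirement on $k$ unchanged, matching the corollary's ``under the conditions of Theorem~\ref{thm_dense} or~\ref{thm:fpb}'' phrasing.

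The caveat you flag at the end is genuine and worth stating sharply: regularity of the individual points does \emph{not} transfer to sums and differences with a controlled constant, and this cannot be repaired in general --- if $u_i$ and $u_j$ are dense unit vectors whose difference happens to be proportional to a coordinate vector, then $u_i - u_j$ has regularity constant $d$ however regular $u_i$ and $u_j$ are. But note that the paper's own proofs of Theorems~\ref{thm_dense} and~\ref{thm:fpb} carry exactly the same gap: the hypothesis there is stated for the $X_i$, yet the single-vector deviation bound (Eq.~(\ref{eq:fin}) in the appendix) is instantiated at the normalized differences $(X_i - X_j)/\|X_i - X_j\|_2$, which must themselves satisfy $\|X^2\|_\infty \leq \frac{c}{d}\|X\|_2^2$. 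Since that condition is scale-invariant, the small-denominator worry disappears once the hypothesis is read, as the paper implicitly does, as holding for the vectors actually projected; under that reading (regularity with constant $c$ of the normalized sums \emph{and} differences, in the case of the corollary) your proof closes exactly as written. So you have identified a real imprecision, but it is inherited from the theorem statements rather than a defect specific to your polarization argument.
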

\emph{Comment on Corollary \ref{cor:tbb}} For RP matrices with zero-mean sub-Gaussian entries, a $1-\delta$ guarantee for projected dot products is proved in \cite{kaban2015improved}. The proof technique used there is not directly tranferrable to RS, although we speculate that for small enough $c$ it could be adapted to RS using some results of Matousek in \cite{matousek2008}.
\subsection{Discussion of Bounds}
These theorems and their corollaries show that we have high probability guarantees on Euclidean geometry preservation for sufficiently regular datasets for RS and provided the dimension of the projected subspace, $k$, is chosen large enough. We note that up to constant terms this is the same guarantee as we have for the existing JLL for RP, therefore it is of optimal order for any linear dimensionality reduction scheme\cite{larsen2014johnson}, but for a fixed $k$ the RS projection is typically orders of magnitude faster than RP. However, there is a trade-off involved since if $c$ is large the projection dimension required will generally be greater than for RP, indeed for RP our $c^2$ can be replaced by a 
\enlargethispage{\baselineskip}single-digit constant (either 2 or 8) which only depends on the choice of a Gaussian or sub-Gaussian RP matrix $R$ and not on the data.\\
Our bounds hold for an RS projection of any set of data vectors meeting the given conditions which may seem rather surprising: For example, if we consider a binary vector $X$ with only one non-zero component then it is straightforward to check that under RS with probability $1-(d-k)/d$ the projected vector is the zero vector, otherwise it has norm 1, and in neither case is the squared norm of $PX$ close to its expected value $\frac{k}{d}\|X\|_2^2$ in general. Furthermore it is easy to verify for any vector with $s < d$ non-zeros that the number of non-zero components sampled by RS has a Hypergeometric$(s,d,k)$ distribution and so if $s \ll d$ this problem remains and the norms of most projections will be very far from their expected value. However, we note that in such cases the regularity constant $c \in [1,d]$ will also be close to $d$ and thus there will only be a non-zero probability guarantee of norm preservation for $k = d$ when, of course, the guarantee holds trivially. Thus for RS it is not possible to avoid some regularity condition on the data and to also have non-trivial JLL-type norm-preservation guarantees, and for fixed $\epsilon$ the projection dimension $k$ must generally be larger than it would be for RP but this is the price to pay for using RS projection. On the other hand we see from our theorem that it is not sparsity of the data \emph{per se} that causes a problem, rather \emph{it is sparsity in the data representation}. We will leverage this fact later in Section \ref{sec:sparse} when we present an efficient way to finesse this problem for the special, but common, case of sparse binary data.

\section{Empirical Corroboration of Theory}
We now present empirical results which corroborate our theorems. We demonstrate that we have norm preservation for Random Subspace projections (RS) as proven by our theory, and we compare RS projection with two RP variants as well as to principal components analysis (PCA) to see that in practice -- given a suitable choice of $k$ -- RS works as well as these alternative solutions. 

\subsection{Datasets}
We use two datasets, the first is a collection of natural images \cite{usc-sipi} similar to those used by Bingham and Manila in \cite{bingham2001}; and the second is the Dorothea dataset from the 2003 NIPS feature selection challenge, which is a very sparse and very high dimensional binary drug-discovery dataset split into three for purposes of the NIPS competition. The characteristics of the datasets are summarized in Tables \ref{DenseDataset} and \ref{SparseDataset}. 
\begin{table}[]
\centering
\begin{tabular}{|l|l|l|l|}
\hline
Name       & Description       & Image Size & $c$\\ \hline
5.1.09     & Moon Surface      & 256x256  & 3.50 \\ \hline
5.1.10     & Aerial            & 256x256   & 2.44 \\ \hline
5.1.11     & Airplane          & 256x256  & 7.92  \\ \hline
5.1.12     & Clock             & 256x256    & 5.03\\ \hline
5.1.14     & Chemical plant    & 256x256 & 2.92    \\ \hline
5.2.08     & Couple            & 512x512   & 2.64 \\ \hline
5.2.09     & Aerial            & 512x512    & 4.10\\ \hline
5.2.10     & Stream and bridge & 512x512  & 2.34  \\ \hline
5.3.01     & Man               & 1024x1024  & 2.23 \\ \hline
5.3.02     & Airport          & 1024x1024 & 3.82 \\ \hline
boat.512   & Fishing Boat      & 512x512  & 2.89  \\ \hline
7.1.01     & Truck             & 512x512    & 3.02\\ \hline
7.1.02     & Airplane          & 512x512  & 9.69  \\ \hline
7.1.03     & Tank              & 512x512    & 2.89 \\ \hline
7.1.04     & Car and APCs      & 512x512 & 2.72   \\ \hline
7.1.05     & Truck and APCs    & 512x512  & 2.37  \\ \hline
7.1.06     & Truck and APCs    & 512x512  & 2.37  \\ \hline
7.1.07     & Tank              & 512x512  & 2.89  \\ \hline
7.1.08     & APC               & 512x512   & 4.85 \\ \hline
7.1.09     & Tank              & 512x512   & 2.40 \\ \hline
7.1.10     & Car and APCs      & 512x512  & 2.73  \\ \hline
7.2.01     & Airplane (U-2)    & 1024x1024 & 4.12 \\ \hline
elaine.512 & Girl (Elaine)     & 512x512   & 2.25 \\ \hline
\end{tabular}
\caption{Natural Image Dataset: $c$ is the regularity constant in the bounds which here was calculated from each complete image.}
\label{DenseDataset}
\end{table}
\begin{table}
\centering
\begin{tabular}{|c|c|c|c|c|c|}
\hline
Name           & \begin{tabular}[c]{@{}l@{}}Number\\ of \\observations\end{tabular} &  \begin{tabular}[c]{@{}l@{}}Features\\with non-\\ zero\\ variance($d$)\end{tabular}&  $c$ & $c' $      \\ \hline
.test  & 800                    & 91362           &  71.21  & 67.27              \\ \hline
.train & 800                    & 88119           &  68.63 &  64.69  \\ \hline
.valid & 350                    & 72113              & 55.94 & 52.02  \\ \hline
\end{tabular}
\caption{Dorothea Dataset: $c$ is the regularity constant in the bounds which here was calculated from each dataset split. $c'$ is the corresponding constant for the Householder-transformed data.}
\label{SparseDataset}
\end{table}
\begin{table}
\centering
\begin{tabular}{|l|c|}
\hline
Method                       & Norm Scaling Factor                    \\ \hline
Gaussian Random Projection   & $\sqrt{\frac{d}{k}}$                 \\ \hline
Sparse Random Projection     & $\sqrt{\frac{1}{k}}$                 \\ \hline
Random Subspace  &  $\sqrt{\frac{d}{k}}$ \\ \hline
Principal Component Analysis & $\sqrt{\frac{Trace(\Sigma)}{Trace(\Sigma(1:k))}}$ \\ \hline
\end{tabular}
\caption{Theoretical norm-scaling quantities for the various projection schemes.}
\label{tableMethod}
\end{table}

\subsection{Experimental Procedure}
For the image data we used all twenty-three publicly available natural grayscale images from the USC-SIPI natural image dataset and we omitted the synthetic images; a short description and the sizes of the images is given in Table 1.  We follow the same protocol as Bingham and Manilla \cite{bingham2001}; for each of the images, we select the top-left corner of a 50x50 pixel window in each image uniformly at random and reshape to a vector with 2500 dimensions, repeating this one thousand times for each of the images.  We then project the vectors using RS, othornormalized Gaussian random projection (RP), Achlioptas sparse random projections (SRP) (with $P_{i,j}= \pm 1$ with probability $\frac{1}{6}$, $0$ with probability $\frac{2}{3}$) and also the first $k$ eigenvectors from applying PCA to the full sample of the one thousand vectors. The projected vectors were all scaled according to the values in Table \ref{tableMethod}. Note that a scaling correction for PCA was not employed in \cite{bingham2001} where it was claimed a straightforward rule is difficult to give. In fact one can verify the average scaling for PCA projected vectors (over the dataset) in the squared Euclidean norm should be $\frac{Trace(\Sigma)}{Trace(\Sigma(1:k))}$ and so we use the square root of this. We let the projection dimension $k$ range from $5$ to $600$ in increments of $5$.\\
For the image data this procedure was repeated for all twenty three image files for each projection approach.
For Dorothea for each of the three dataset splits, we first removed features with zero variance from the data set (these were all zero-valued features) but to avoid possible confounds we carried out no other filtering.  We then projected the data using RP, SRP and RS as before with the projection dimension $k\in \{5,\ldots, 70,000\}$ for RS, $k\in \{5,\ldots, 2,750\}$ for RP and SRP. We also applied RS to the data transformed by a fixed Householder reflection since this reduces the value of $c$ in an efficient manner for these data -- details follow shortly. Since the runtime and memory overhead is prohibitive, we did not run PCA on this dataset.\\

For both types of data we randomly selected one hundred observations and for each possible pair of these we calculated the $\ell_2$ norm of the difference between the (scaled) projected observations $\|P(u-v)\|$ and the original points $\|u-v\|$. We then calculated the ratio between the (scaled) projected norm and the true norm  $\frac{\|P(u-v)\|}{\|u-v\|}$ for each observation where the scaling constants used were those in Table \ref{tableMethod}.\\
For the image data we plot, for each choice of $k$, the average of this value over all images as well as the $5$-th and $95$-th percentiles for the different ratios in Figure \ref{figEintD}. We also plot the runtime, for the image data, for each projection method versus $k$ in Figure \ref{figRunTimeD}.\\
For Dorothea we repeated our experiments five times on each dataset split, to obtain an average over fifteen runs. We report the mean ratio of the norms $\frac{\|P(u-v)\|}{\|u-v\|}$ as well as the $5$-th and $95$-th percentiles in Figure \ref{figEIntS}. The average runtime for each different approach can be seen in Figure \ref{figRunTimeS}.

\subsection{Smoothing binary data using a fixed Householder transform}\label{sec:sparse}
A Householder transform $H$ is given by $H:= I - 2vv^T$ where $I$ is the identity matrix and $\|v\|_2 = 1$. One can easily check that $v$ is an eigenvector of $H$ with eigenvalue $-1$, all other eigenvalues are $1$, and that $H = H^T = H^{-1}$. Geometrically, $H$ is therefore a reflection about a hyperplane through the origin with normal vector $v$ and, in particular, $\ell_2$ norms are preserved by $H$: $\|HX\|_2 = \|X\|_2$ for any $X$. Moreover $HX = X - 2v(v^{T}X)$ so one need not evaluate the matrix multiplication explicitly. The benefit of $H$ to us in the setting of RS projection is that it provides an efficient way in which to `densify' sparse binary data. As already discussed, if X is a sparse binary vector then $c \simeq d$ and we have no non-trivial geometry-preservation guarantees for a RS projection. Indeed if X is binary with $s \ll d$ non-zeros entries then $c=\frac{d}{s}$. However, we see from Theorem 3.1, that if we can reduce $c$ then a non-trivial guarantee is possible. Thus an efficient method to reduce $c$ would be useful for these data. We have the following theorem:

\begin{theorem}[Densification]\label{thm:dense}
Let $X \in \{0,1\}^d$ with $s$ non-zeros.  
Let $v \in \R^d $, $v_j  = \frac{1}{\sqrt{d}} , \forall j \in \{ 1,2 \dots d\} $, and let $H := I - 2vv^T$ where $I$ is the identity matrix.
Denote by $X^2$ and $(HX)^2$, the vectors consisting of the squared entries of $X$ and $HX$ respectively.\\
Let $c =  \frac{d \|X^2 \|_{\infty}}{\|X\|^2_{2}} = \frac{d}{s}$  and let $c' =  \frac{d \|(HX)^2 \|_{\infty}}{\|HX\|^2_{2}}$.\\
Then if $s< \frac{d}{2}$, $c' < c$,\\
if $s=\frac{d}{2}, c'= c$, and\\
if $s> \frac{d}{2}$, $c' > c$. 
\end{theorem}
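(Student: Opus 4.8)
The plan is to compute $HX$ explicitly and then evaluate $c'$ directly, since the whole statement collapses to elementary algebra once the coordinates of $HX$ are known. First I would record the quantities that are fixed from the outset: because $X$ is binary we have $X^2 = X$, so $\|X^2\|_\infty = 1$ and $\|X\|_2^2 = s$, which confirms $c = d/s$; and because $H$ is a reflection it preserves the Euclidean norm, so $\|HX\|_2^2 = \|X\|_2^2 = s$ as well. Hence the entire behaviour of $c'$ is controlled by the single remaining quantity $\|(HX)^2\|_\infty = \|HX\|_\infty^2$.

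Next I would compute $HX = X - 2v(v^TX)$ using $v^TX = \frac{1}{\sqrt d}\sum_j X_j = s/\sqrt d$, so that $2v(v^TX) = \frac{2s}{d}\mathbf{1}$. Every coordinate of $HX$ then takes one of exactly two values: on the support of $X$ it equals $1 - \frac{2s}{d} = \frac{d-2s}{d}$, and off the support it equals $-\frac{2s}{d}$. Squaring, the largest squared entry is $\|(HX)^2\|_\infty = \frac{1}{d^2}\max\{(d-2s)^2,\,4s^2\}$, and substituting this together with $\|HX\|_2^2 = s$ into the definition of $c'$ yields the closed form $c' = \frac{\max\{(d-2s)^2,\,4s^2\}}{ds}$.

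Finally, writing $c = d/s = d^2/(ds)$, the three claimed cases follow by comparing $\max\{(d-2s)^2,\,4s^2\}$ with $d^2$ over the common denominator $ds$. Since $(d-2s)^2 < d^2$ holds for all $0 < s < d$ while $4s^2 < d^2$ holds precisely for $s < d/2$, the maximum lies strictly below $d^2$ iff $s < d/2$ (giving $c' < c$), equals $d^2$ at $s = d/2$ (where $4s^2 = d^2$ and $(d-2s)^2 = 0$, giving $c' = c$), and exceeds $d^2$ for $s > d/2$ (giving $c' > c$).

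The calculation has no serious obstacle, but the step deserving care is the evaluation of the max, and in particular the observation that the switch of \emph{which} entry attains $\|(HX)^2\|_\infty$ occurs at $s = d/4$ rather than at the theorem's threshold $s = d/2$: for $d/4 < s < d/2$ it is the off-support entry $4s^2$ that dominates, yet $c' < c$ still holds there because $4s^2$ remains below $d^2$. The decisive comparison is thus always against $d^2$, and it is that comparison, not the location of the max, which produces the dichotomy at $s = d/2$.
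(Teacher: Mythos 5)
Your proposal is correct and follows essentially the same route as the paper's proof: compute $HX = X - \frac{2s}{d}\mathbf{1}$ explicitly, use $\|HX\|_2^2 = \|X\|_2^2 = s$ to reduce everything to $\|(HX)^2\|_\infty = \frac{1}{d^2}\max\{(d-2s)^2,\,4s^2\}$, and compare with $c = d/s$. Your only departure is presentational -- comparing $\max\{(d-2s)^2,\,4s^2\}$ against $d^2$ over the common denominator $ds$ in one stroke, rather than the paper's three-way case split at $s = d/4$ and $s = d/2$ -- and your closing remark correctly identifies that the switch in which entry attains the max (at $s=d/4$) is immaterial to the dichotomy at $s=d/2$.
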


\begin{proof}
Since $H$ is a reflection and $X$ is binary with $s$ non-zeros, we have $\|HX\|^2_2 = \|X\|^2_2 = s$ and $\|X^2 \|_{\infty} = 1$.
Thus to compare $c$ and $c'$,  we only need to consider what values $\|(HX)^2 \|_{\infty}$ can take.\\
Now, the $j$-th entry of $HX$ is:
\begin{eqnarray}
(HX)_{j} = X_{j} - 2v_j v^{T}X =  X_j - 2v_j \sum_{i=1}^d \frac{X_{i}}{\sqrt{d}} \nonumber \\
= X_{j} - 2\frac{1}{\sqrt{d}}\sum_{i=1}^d \frac{X_{i}}{\sqrt{d}} = X_{j} - 2 \frac{s}{d}
\label{HX}
\end{eqnarray}
So when $X_{j}=1$, $(HX)_{j}= \frac{d-2s}{d}$ and when $X_{j}=0$, $(HX)_{j}= -\frac{2s}{d}$.\\
Next $\|(HX)^2\|_{\infty} = \max_{j \in \{1,2,\ldots,d\}} |(HX)_j^2|$ so, checking cases:\\
For $s < \frac{d}{4}, \left(\frac{d-2s}{d} \right)^2 = \frac{d^2-4sd+4s^2}{d^2}  >  \frac{4s^2}{d^2} =\left( \frac{-2s}{d}\right) ^2 $ So, $\| (HX)^2\|_\infty = \left(\frac{d-2s}{d}\right)^2$ and $c' =\frac{d \left(\frac{d^2 -2sd+4s^2}{d^2}\right)}{s}= \frac{d^2 - 4sd+ 4s^2}{sd} = \frac{d}{s} - 4 + \frac{4s}{d} < \frac{d}{s} = c$\\
For $\frac{d}{4} \leq s < \frac{d}{2}, \frac{d^2-4sd+4s^2}{d^2}   \leq  \frac{4s^2}{d^2}$. So,$\|(HX)^2\|_\infty = \left(\frac{-2s}{d} \right)^2$ and $c' =\frac{d \left(\frac{4s^2}{d^2}\right)}{s}= \frac{4s}{d}  <  \frac{d}{s} = c$\\
Finally, for $\frac{d}{2} \leq s \leq d, \frac{d^2-4sd+4s^2}{d^2} \leq  \frac{4s^2}{d^2}$ and $c' = \frac{4s}{d} \geq \frac{d}{s} = c$.\\
This completes the proof.
\end{proof}
\emph{Comments on Theorem \ref{thm:dense}} Picking the theorem apart we see that when $s<\frac{d}{4}$, our Householder transform ensures $c-c'\geq 3$ which translates to (at least) a 9-fold reduction in $k$, however note that $k$ will still typically remain large compared to the corresponding quantity for RP since $c = d/s$ will usually have been large in the first place, on the other hand when $s=\frac{d}{4}$ all transformed entries have the same absolute value and $c'=1$ is minimal and we have a stronger guarantee than for RP! When $\frac{d}{4} < s <\frac{d}{2}$ our Householder transform improves $c'$ by a factor of $\frac{d^2}{4s^2}$, and when $s$ is any greater than $d/2$ then applying this Householder transform instead makes the data \emph{less} regular. For very sparse, very high-dimensional, binary data the improvement from our approach is therefore moderate -- we present experiments on such data later -- but for moderate values of $d$ or of $s$, for example if $d$ is just a few thousand or if $s \simeq d/4$, then the improvement from using RS with our choice of $H$ can be large.\\

Note that this Householder transform can be applied in linear (in $d$) time, and we only need knowledge of the number of non-zero entries in the vector to apply it. If we have stored the non-zero indices for each of the $X_i$ then the transform can be done in time linear in $s$. In any case, avoiding explicit pre-multiplication of the data by $H$ allows us to obtain smaller distortion in norms for a fixed value of $k$, or the same error for a smaller value of $k$, with no increase in the theoretical time complexity of RS and at very low computational cost in practice -- see Figure \ref{figRunTimeS}.

\begin{figure}[ht!]
\centering
\includegraphics[width=0.5\textwidth]{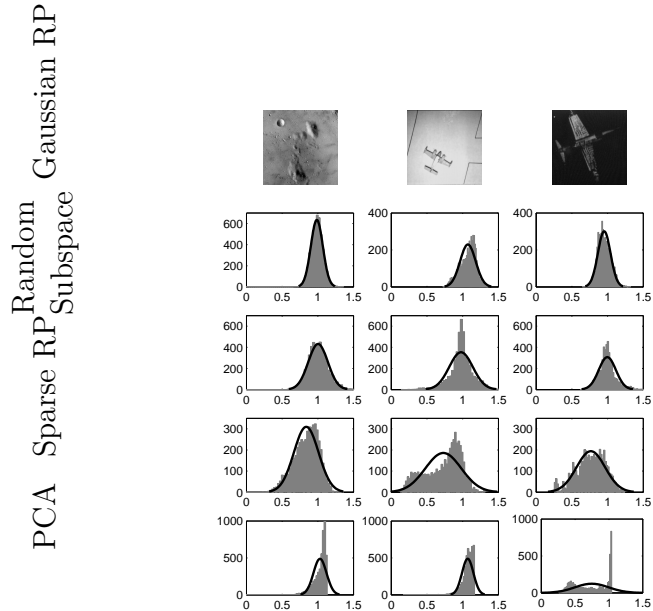}
  \put(-250,42){\rotatebox{90}{PCA}}
  \put(-250,77){\rotatebox{90}{Sparse RP}}
  \put(-260,127){\rotatebox{90}{
  \begin{tabular}[c]{@{}l@{}}Random \\Subspace\end{tabular}
  }}
  \put(-250,182){\rotatebox{90}{Gaussian RP}}
\caption{Fixed $k$, small $c$: Histograms of $\frac{\|P(X_i -X_j)\|}{\|X_i -X_j\|}$ for $k=50$ dimensions on three representative images with overlaid normal density plots, $n=4950$. }
\label{figHistD}
\end{figure}

\begin{figure*}
\includegraphics[width=\textwidth]{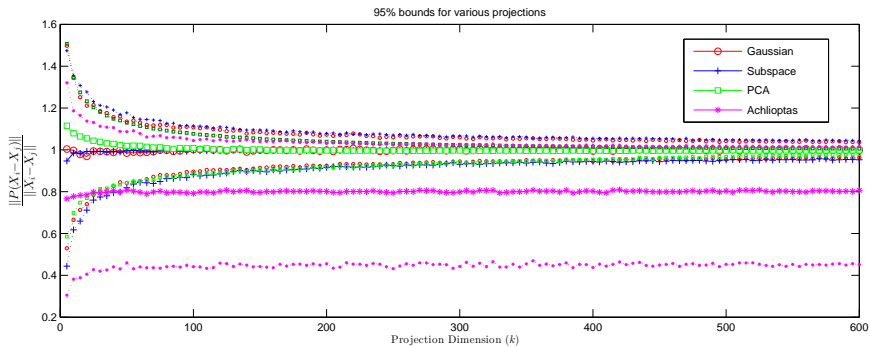}
\caption{Mean and 5th and 95th percentiles of $\frac{\|P(X_i -X_j)\|}{\|X_i -X_j\|}$ for image data vs. $k$. We see that for $k \gtrsim 80$ Gaussian RP and RS are indistinguishable on these data. Note also the 5th percentile for SRP cf. Figure \ref{figHistD}: Sparse RP frequently seems to underestimate norms.}
\label{figEintD}
\end{figure*}

\begin{figure}[ht!]
\includegraphics[width=0.5\textwidth]{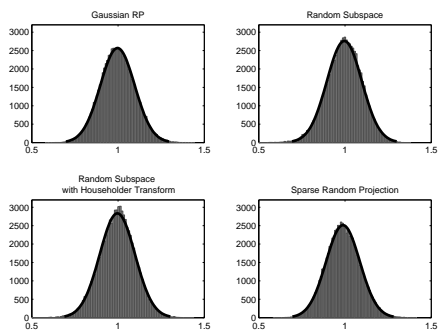}
\centering
\caption{Fixed $\epsilon$, large $c$: Histograms of $\frac{\|P(X_i -X_j)\|}{\|X_i -X_j\|}$ for Dorothea dataset with $k_{rp}=50$ (RP and SRP, top left and bottom right plots) and comparison with RS and Householder + RS when $k_{rs} = c^2 \times k_{rp}$ dimensions with overlaid normal density plots, $n=4,950$. We see that errors behave nearly identically for RP and RS as predicted by theory.}
\label{figHistS}
\end{figure}

\begin{figure*}[ht!]
\centering
\includegraphics[width=\textwidth]{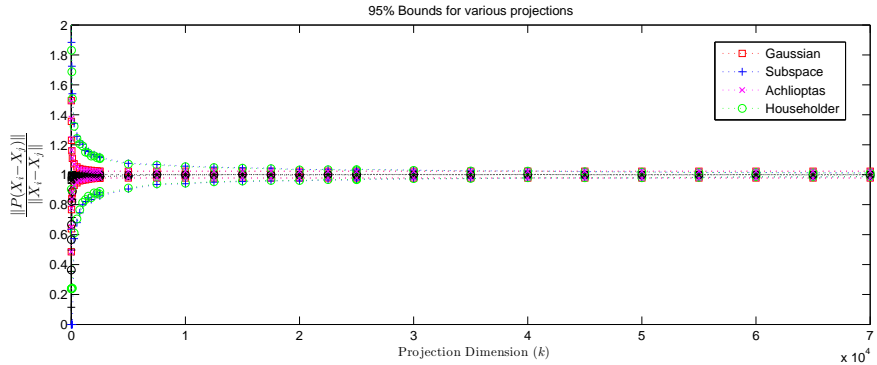}
\caption{Mean and 5th and 95th percentiles of $\frac{\|P(X_i -X_j)\|}{\|X_i -X_j\|}$ for Dorothea vs. $k$. We see that for RS a much higher $k$ is required than for RP, though RS eventually catches up. The main improvement from using Householder transform is found for lower values of $k$, but RP is still better on these data.}
\label{figEIntS}
\end{figure*}

\begin{figure}[h!t]
\centering
\includegraphics[width=0.5\textwidth]{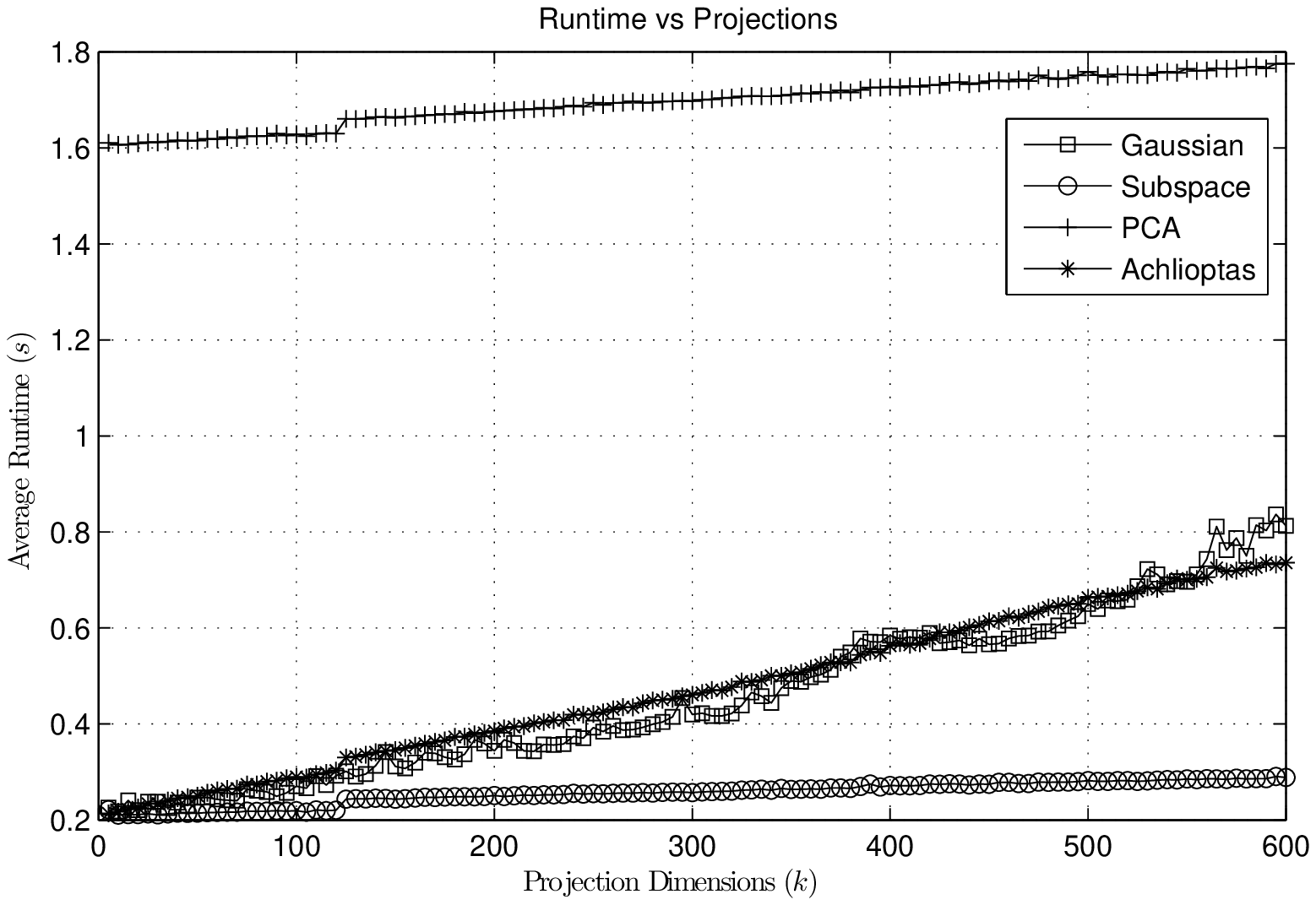}
\caption{Comparison of the runtime on dense image datasets with dimensionality $d=2500$ }
\label{figRunTimeD}
\centering
\includegraphics[width=0.5\textwidth]{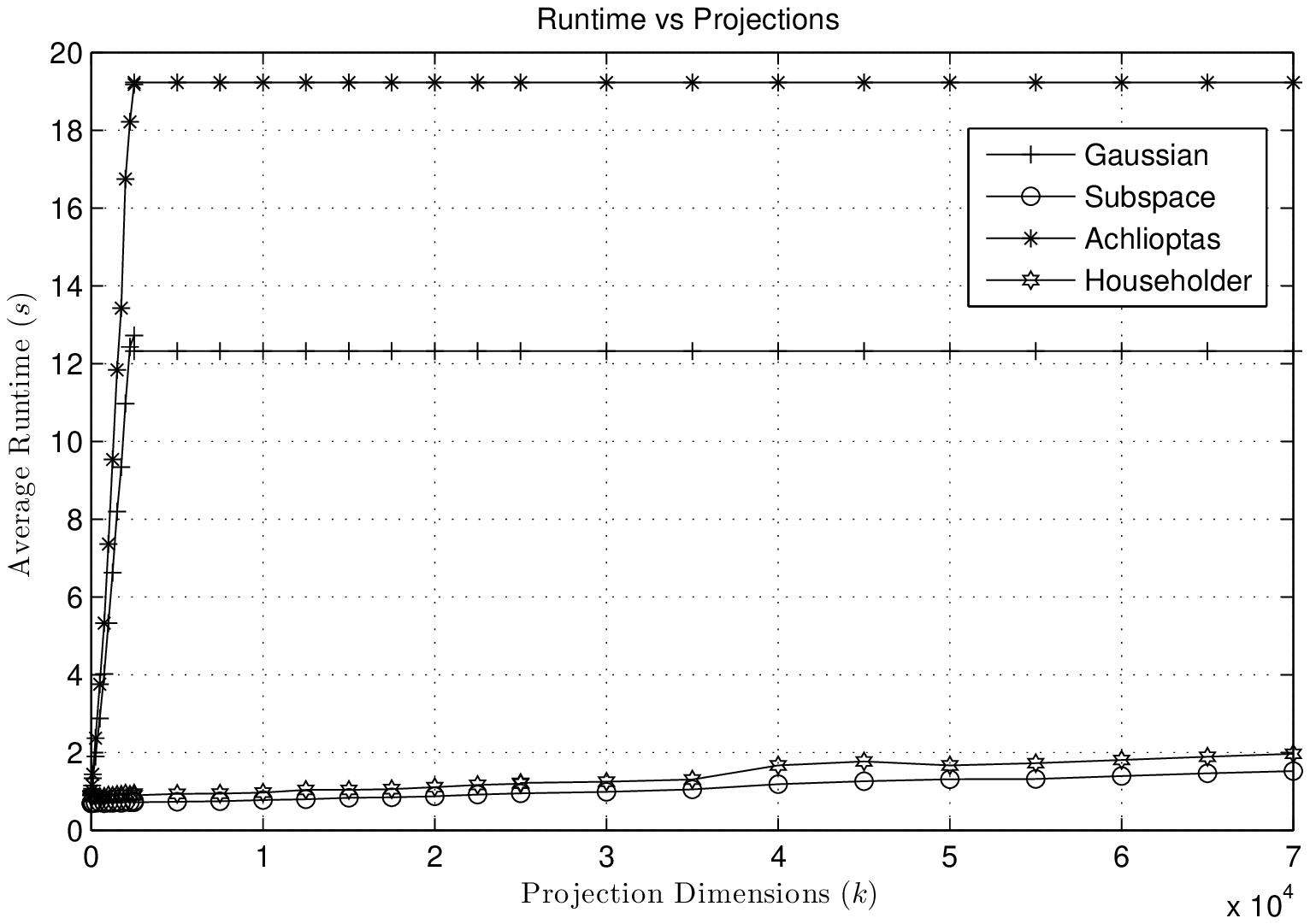}
\caption{Comparison of the runtime on Dorothea with $d \simeq 100,000$, $s/d \simeq 0.1$. Note that due to memory limitations the runs for RP and SRP are from $k = 5$ to $2750$ and the Gaussian RP was not orthonormalized. We plot the runtime for $k = 2750$ for values of $k > 2750$. Gaussian RP was faster than SRP here because generating the SRP matrix was slow for such large values of $d$.}
\label{figRunTimeS}
\end{figure}

\subsection{Experimental Results and Discussion}
\subsubsection{Random Subspace on Image Data}
Our experimental results corroborate our theory, and we observe for natural image data that RS indeed gives similar performance in terms of norm preservation to RP and, surprisingly, better performance than SRP on these data (as does RP) -- see Figures \ref{figHistD} and \ref{figEintD}. Given the small values of $c$ estimated for these data (See Table \ref{DenseDataset}) the similar performance to RP is broadly in line with what we would predict from theory, indeed Figure \ref{figEintD} shows that RS is nearly indistinguishable from the computationally more expensive RP on these data. On the other hand one remarkable finding is that the distribution of norms for SRP is left-skewed here, and there is ample evidence that SRP consistently tends to underestimate distances between points when the correct theoretical scaling is applied, at least on these data. In this respect SRP does worst on images such as the high contrast one above the centre column of Table \ref{figHistD}, where we might instead reasonably expect RS to suffer from such a problem: Indeed, the normal fit for RS applied to this image does show heavier tails for RS than for RP, but unlike SRP the error distribution is symmetric and the centre of mass is in the right place at 1. We don't have a reasonable explanation for why SRP should be worse than RS on these images, but as we see clearly in Figure \ref{figEintD} this problem persists even as $k$ grows. A further interesting finding is that, unlike the results reported in \cite{bingham2001}, the performance of PCA scaled according to the scheme outlined in Table \ref{tableMethod} is -- for a large enough choice of $k$ -- superior to all three random alternatives we considered. This is to be expected since PCA maximises the retained within-feature variance on the projected sample and the scaling we proposed is adaptive in a non-linear way to this quantity, unlike the other alternatives which do not consider local properties of the data cloud and use a scaling that is linear in $k$. How far similar outcomes would hold for other types of data remains for future research, but we note that it must depend on both the choice of $k$ and also on the rate at which the spectrum of the sample covariance matrix -- the eigendecomposition of which gives the principal components -- decays, since the scaling correction we apply to PCA is piecewise constant in $k$ with a non-uniform step size. We also note that, unlike for RP, SRP and RS, for PCA there is no theory to guide the user's choice of $k$ \emph{a priori} even if one has access to the constant $c$ we require in our RS bounds.\\

Finally we look at the computational cost of the different approaches considered: These are compared in Figure \ref{figRunTimeD}.
For a fixed $k$ there is of course a significant runtime improvement in using RS compared to RP and SRP. On these data it seems that choosing $k$ the same for RS, RP and SRP works equally well and so, everything else being equal, one would likely prefer RS to RP or SRP here. Note that in general however, for fixed error, the projection dimension $k$ for RS will be around $c^2$ times greater than for RP or SRP so there is a trade-off. Whether one would prefer to use RS with a larger $k$ than for RP (for the same high-probability error guarantee) will depend on problem specifics such as the time complexity of the algorithm receiving the projected data with respect to the dimension, or whether it is more important to classify or to train quickly. Finally PCA is, of course, computationally much more expensive when compared to the other three approaches, but we see that with the proper scaling term on these data it outperforms them in terms of geometry preservation. Thus for PCA there is essentially the same accuracy-vs-complexity trade-off as for RS.

\subsubsection{Random Subspace on Dorothea Dataset}
The Dorothea data is very high-dimensional with only around 10\% of entries non-zero and for these data the theory predicts that we will have poor norm preservation from RS compared to RP except when $k$ is very large, but that this situation will be improved by applying our Householder transformation.
Our experimental results -- see Figures \ref{figEIntS} and \ref{figHistS} -- show that indeed is the case. RS does catch up with RP and SRP in terms of error eventually, but both RP and SRP attain smaller error much more quickly than RS. When the Householder transform $H$ is applied to the data first, then RS catches up quicker, but still requires a higher $k$ than RP or SRP -- in this case because the densifying regularization applied by $H$ only gives a small improvement in $c$ and therefore, for such super-sparse high-dimensional data, it does not bring great improvement.
On the other hand we see in Figure \ref{figHistS} that after scaling the projected dimension required for RP by $c^2$ that RS indeed has comparable (and sometimes better) error performance than RP or SRP, and that the Householder transform slightly reduces the error variance.  We also see in Figure \ref{figEIntS} that interestingly unlike for the image data the scaled SRP does not tend to consistently underestimate norms, and all approaches (eventually) have their centre of mass at 1. Finally despite the increased projection dimension, for a fixed error guarantee either variant of RS still gives us significantly improved runtime compared to RP and SRP (See Figure \ref{figRunTimeS}). 

\section{Conclusions and Future Work}
We proved that random subspace can act as a norm preserving projection and showed how this norm preservation property depends on the regularity of the data features. We corroborated this theory empirically and saw that, for regular data such as natural images, random subspace can achieve geometry preservation performance comparable to random projection, but with a significant runtime improvement. 

We also provided an efficient technique for densifying sparse data to close the performance gap between RP and RS on sparse data, though very sparse data remains challenging for this approach.

We are currently working on improving on uniform sampling to construct the RS projection, e.g. by taking account of stratification in the features, or by constructing $H$ with regard to the distribution of features. 

Finally we note a connection with dropout regularization, which is essentially a single RS projection applied to a layer of nodes (usually the last)in a neural network. Based on earlier work \cite{durrant2010compressed, durrant2013sharp} we conjecture that the number of retained nodes in a dropout scheme should be logarithmic in the number of classes to guarantee good classification performance, and we plan to explore this in future work.

\appendix
\section{Proofs of Bounds}
We will use the following two lemmas which are from \cite{hoeffding1963probability, serfling1974probability}.
\begin{lemma}[Hoeffding, 1963 \cite{hoeffding1963probability} Theorem 2.]
\label{lemma:hoeff}
Let $X_1, X_2, \ldots , X_k$ be independent random variables such that, $\forall i \in 1,2, \ldots, k$ we have $X_i \in [a_i,b_i]$ with probability 1. Denote by $S_k := \sum_{i=1}^k X_i$ and fix $t>0$. Then:
\[
\pr \l \{ \l|S_k - \E[S_k] \r| \geq t \r \} \leq 2 \exp \l(- \frac{2t^2}{\sum_{i=1}^k (b_i - a_i)^2}\r)
\]
\end{lemma}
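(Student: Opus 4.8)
The plan is to prove this via the standard Chernoff (exponential moment) method, the main ingredient being a moment generating function bound for bounded centred random variables. Writing $Y_i := X_i - \E[X_i]$, each $Y_i$ is centred and takes values in an interval of width $b_i - a_i$, and $S_k - \E[S_k] = \sum_{i=1}^k Y_i$. First I would bound the upper tail: for any $\lambda > 0$, Markov's inequality applied to the increasing transform $z \mapsto e^{\lambda z}$ gives
\[
\pr\left\{ \sum_{i=1}^k Y_i \geq t \right\} \leq e^{-\lambda t}\, \E\left[ \exp\left( \lambda \sum_{i=1}^k Y_i \right) \right] = e^{-\lambda t} \prod_{i=1}^k \E\left[ e^{\lambda Y_i} \right],
\]
where the factorisation uses independence of the $X_i$ (hence of the $Y_i$).

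The key step, and the main technical obstacle, is to control each factor $\E[e^{\lambda Y_i}]$. For this I would establish Hoeffding's lemma: if $Y$ is centred with $Y \in [a,b]$ almost surely, then $\E[e^{\lambda Y}] \leq \exp\left( \lambda^2 (b-a)^2 / 8 \right)$. The argument exploits convexity of $y \mapsto e^{\lambda y}$, which gives the pointwise bound $e^{\lambda y} \leq \frac{b - y}{b-a} e^{\lambda a} + \frac{y - a}{b-a} e^{\lambda b}$ on $[a,b]$; taking expectations and using $\E[Y] = 0$ kills the linear term and leaves an expression of the form $e^{\phi(u)}$, where $u := \lambda(b-a)$ and $\phi$ depends only on $p := -a/(b-a) \in [0,1]$. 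One then checks $\phi(0) = \phi'(0) = 0$ and $\sup_u \phi''(u) \leq 1/4$, so a second-order Taylor expansion yields $\phi(u) \leq u^2/8$, which is exactly the claimed bound. I expect this derivative estimate — showing the variance-type quantity $\phi''(u)$ never exceeds $1/4$ — to be the fiddliest part of the argument.

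Combining, I would substitute the per-term bound to obtain $\pr\{\sum Y_i \geq t\} \leq \exp\left( -\lambda t + \frac{\lambda^2}{8}\sum_{i=1}^k (b_i - a_i)^2 \right)$, then optimise over $\lambda > 0$. The exponent is a quadratic in $\lambda$ minimised at $\lambda^\star = 4t / \sum_{i=1}^k (b_i - a_i)^2$, and substituting this value gives the one-sided bound $\exp\left( -2t^2 / \sum_{i=1}^k (b_i - a_i)^2 \right)$.

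Finally I would recover the two-sided statement together with the factor of $2$. Applying the identical argument to $-Y_i$ (which lies in $[-b_i, -a_i]$, an interval of the same width $b_i - a_i$) bounds the lower tail $\pr\{\sum Y_i \leq -t\}$ by the same quantity; a union bound over the two tails then yields $\pr\{|S_k - \E[S_k]| \geq t\} \leq 2\exp\left( -2t^2 / \sum_{i=1}^k (b_i - a_i)^2 \right)$, as required.
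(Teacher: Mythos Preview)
Your argument is the standard Chernoff--Hoeffding proof and is correct as written; the optimisation and the derivation of Hoeffding's lemma via the bound $\phi''(u) \leq 1/4$ are exactly how this is usually done.

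However, the paper does not prove this lemma at all: it is simply quoted with attribution to Hoeffding (1963, Theorem~2) and then used as a black box in the proof of the Basic Bound. So there is nothing to compare against --- you have supplied a full proof where the paper only gives a citation. If your goal was to match the paper's treatment, a one-line reference to \cite{hoeffding1963probability} would suffice; if your goal was to make the exposition self-contained, what you have is fine and standard.
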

\begin{corollary}[to Lemma \ref{lemma:hoeff}, \cite{hoeffding1963probability} Section 6.]
\label{corr:basic}
Let $C := c_1,c_2, \ldots , c_d$ be a finite population of $d$ values where $\forall j = 1,2,\ldots,d$ we have $c_j \in [a_j,b_j]$ with probability 1. Let $X_i$ and $Y_i$, $i=1,2,\ldots,k$ be samples without and with replacement from $C$ respectively and define by $S_k(X)$ and $S_k(Y)$ the corresponding sample totals. Fix $t>0$. Then it holds that:
\[
\pr \l \{ \l|S_k(X) - \E[S_k(X)] \r| \geq t \r \} \leq \pr \l \{ \l|S_k(Y) - \E[S_k(Y)] \r| \geq t \r \}
\]
\end{corollary}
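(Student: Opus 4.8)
The plan is to center both totals and then compare deviation magnitudes. Since each $X_i$ and each $Y_i$ is marginally uniform on $C$, the two totals share a common mean, $\E[S_k(X)] = \E[S_k(Y)] = k\bar c$ with $\bar c = \frac1d\sum_{j=1}^d c_j$, so the displayed inequality is really a comparison of $|S_k(X)-k\bar c|$ with $|S_k(Y)-k\bar c|$ about the same centre. The natural engine is Hoeffding's own Section~6 majorization: for every convex $\phi$, $\E\,\phi(S_k(X)) \le \E\,\phi(S_k(Y))$. First I would reprove this by writing the with-replacement expectation as the uniform average of $\phi(\sum_j c_{i_j})$ over all $d^k$ ordered index tuples, splitting off the tuples with $k$ distinct indices — whose average reproduces exactly $\E\,\phi(S_k(X))$ — and grouping the remaining (repeat-containing) tuples by value multiset so that Jensen's inequality applied within each group shows the extra mass only inflates the average. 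That delivers the convex order cleanly.

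The hard part — and where I expect the argument to break — is passing from convex order to the \emph{pointwise} tail inequality as stated. To reach it one would have to feed the indicator $\phi_t(x) := \mathbf{1}\{|x - k\bar c| \ge t\}$ into the majorization, but $\phi_t$ is not convex, so the majorization gives no control over $\E\,\phi_t(S_k(X)) = \pr\{|S_k(X)-k\bar c|\ge t\}$. What convex order \emph{does} buy is the integrated (stop-loss) bound $\int_t^\infty \pr\{|S_k(X)-k\bar c| \ge s\}\,\d s \le \int_t^\infty \pr\{|S_k(Y)-k\bar c|\ge s\}\,\d s$ and the moment-generating-function domination $\E\,e^{\lambda S_k(X)} \le \E\,e^{\lambda S_k(Y)}$, and neither of these implies the displayed pointwise comparison. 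This is exactly the gap flagged against the earlier attempt: the exponential-bound domination is all that convexity can yield, because the pointwise (peakedness) claim is genuinely stronger.

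I would therefore test whether the pointwise statement can hold at all, and a one-line finite-population witness shows it cannot. Take $d=4$, $C = \{0,0,1,4\}$, and $k=2$, so $k\bar c = 5/2$. Sampling without replacement forces $S_k(X) \in \{0,1,4,5\}$, hence $|S_k(X) - 5/2| \ge 3/2$ almost surely; sampling with replacement gives $S_k(Y)=2$ with probability $1/16$, a deviation of only $1/2$. Thus at $t=1$ we get $\pr\{|S_k(X)-5/2|\ge 1\} = 1 > 15/16 = \pr\{|S_k(Y)-5/2|\ge 1\}$, which contradicts the displayed inequality. The honest and still-sufficient conclusion is to state the corollary in its convex-order (equivalently MGF-domination) form — precisely what Hoeffding's Section~6 result establishes, and exactly the ingredient that the Chernoff step in Theorem~\ref{thm:fpb} consumes when it transfers the with-replacement bound of Lemma~\ref{lemma:hoeff} to the without-replacement total. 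So my \emph{proof} plan ends by correcting the claim rather than by deriving it as worded.
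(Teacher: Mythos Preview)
Your analysis is correct, and in fact the paper supplies no proof of this corollary at all: it simply cites Hoeffding's Section~6 and immediately uses the result downstream. So there is nothing to compare your argument against except Hoeffding's original, and you have read that correctly.

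Your counterexample is sound. With $C=\{0,0,1,4\}$, $k=2$, $k\bar c = 5/2$, every without-replacement sum lies in $\{0,1,4,5\}$ and hence has deviation at least $3/2$, while with replacement the outcome $1+1=2$ has deviation $1/2$ and occurs with probability $1/16$. So at $t=1$ the displayed inequality reads $1 \le 15/16$, which is false. The corollary as the paper states it is therefore not a theorem.

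What Hoeffding actually proves in Section~6 (his Theorem~4) is exactly what you identify: the convex-order inequality $\E\,\phi(S_k(X)) \le \E\,\phi(S_k(Y))$ for all continuous convex $\phi$, and hence the transfer of the \emph{exponential bounds} of his Theorems~1--3 to the without-replacement case via the Chernoff method. He never claims the pointwise tail comparison. The paper's phrasing conflates ``the same Hoeffding bound applies'' with ``the tail probability itself is dominated'', which is a strictly stronger and, as you show, false assertion.

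Crucially, your diagnosis of the fix is also right for the paper's purposes. In the proof of Theorem~\ref{thm_dense} (Subsection~\ref{subsec:basic}) the corollary is invoked only to justify applying the exponential bound of Lemma~\ref{lemma:hoeff} to the without-replacement total $\sum_{i\in I} X_i^2$. For that step one needs only $\E\,e^{\lambda S_k(X)} \le \E\,e^{\lambda S_k(Y)}$, which the convex-order result delivers. So the downstream theorems survive; only the corollary's statement needs to be weakened to the MGF (or convex-order) form, or replaced by a direct citation of Hoeffding's Theorem~4.
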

Note that $\E[S_k(X)] = \E[S_k(Y)]$, thus we may bound the probability of a large deviation in the sample total from its expectation in the case of a (non-independent) sample without replacement by the corresponding probability for an independent sample with replacement.
\begin{lemma}[Serfling, 1974 \cite{serfling1974probability} Corollary 1.1.]
\label{lemma:serf}
Let $C := c_1,c_2, \ldots , c_d$ be a finite population of $d$ values where $\forall j = 1,2,\ldots,d$ we have $c_j \in [a_j,b_j]$ with probability 1. Let $X_i$, $i=1,2,\ldots,k$ be a simple random sample without replacement from $C$. Denote by $S_k := \sum_{i=1}^k X_i$ and define the sampling fraction $f_k := (k-1)/d$. Fix $t>0$. Then:
\[
\pr \l \{ \l|S_k - \E[S_k] \r| \geq t \r \} \leq 2 \exp \l(- \frac{2t^2}{(1-f_k)\sum_{i=1}^k (b_i - a_i)^2}\r)
\]
\end{lemma}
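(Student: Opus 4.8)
This statement is Serfling's finite-population-corrected Hoeffding inequality, which the paper imports verbatim; I sketch how one would prove it from scratch. The plan is to follow the Chernoff route but to control the moment generating function (MGF) of a sampling-\emph{without}-replacement sum directly, so as to recover the correction factor $(1-f_k)$ that the naive reduction to sampling with replacement (as in Corollary \ref{corr:basic}) would lose. First I would reduce to a one-sided statement: it suffices to bound $\pr\{S_k - \E[S_k] \geq t\}$, since applying the same bound to the population $-c_1,\ldots,-c_d$ (which has identical interval lengths $b_j - a_j$) controls the lower tail, and a union bound over the two tails supplies the leading factor $2$. Then, for any $\lambda > 0$, Markov's inequality gives $\pr\{S_k - \E[S_k] \geq t\} \leq e^{-\lambda t}\,\E[e^{\lambda(S_k - \E[S_k])}]$, so the whole problem reduces to bounding this centered MGF.

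The key device is to realise the sample as the first $k$ coordinates of a uniformly random permutation $X_1,\ldots,X_d$ of the population, with $\bar c := \frac1d\sum_j c_j$ so that $\E[S_k] = k\bar c$. Writing $\mathcal F_m := \sigma(X_1,\ldots,X_m)$, I would decompose $S_k - \E[S_k] = \sum_{m=1}^k d_m$ into the martingale differences $d_m := \E[S_k \mid \mathcal F_m] - \E[S_k\mid \mathcal F_{m-1}]$ and bound each conditional MGF via Hoeffding's lemma (a mean-zero variable supported in an interval of length $\ell$ has MGF at most $e^{\lambda^2\ell^2/8}$). The ranges of the $d_m$ are governed by the fact that, conditioned on $\mathcal F_{m-1}$, the draw $X_m$ is uniform over the $d-m+1$ remaining values; a convenient way to see the resulting structure is that $W_m := (S_m - m\bar c)/(d-m)$ is itself an $\mathcal F_m$-martingale, as a short computation using $\E[X_{m+1}\mid\mathcal F_m] = (d\bar c - S_m)/(d-m)$ confirms. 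The crucial and delicate point is that these effective ranges shrink as the residual population shrinks, so that the accumulated $\sum_{m=1}^k \ell_m^2$ comes to $(1-f_k)\sum_{i=1}^k (b_i-a_i)^2$ rather than the full $\sum_i(b_i-a_i)^2$; this shrinkage is the origin of the finite-population correction.

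Combining these increment bounds yields $\E[e^{\lambda(S_k - \E[S_k])}] \leq \exp\big(\tfrac{\lambda^2}{8}(1-f_k)\sum_i(b_i-a_i)^2\big)$, and substituting into the Chernoff inequality and optimising over $\lambda$ (the minimiser is $\lambda = 4t/[(1-f_k)\sum_i(b_i-a_i)^2]$) gives the one-sided bound $\exp\big(-2t^2/[(1-f_k)\sum_i(b_i-a_i)^2]\big)$; doubling completes the proof. The main obstacle is precisely the sharp bookkeeping of the previous paragraph: the crude Azuma estimate that charges each step its full range $(b_i-a_i)$ recovers only Hoeffding's without-replacement bound — equivalently the reduction in Corollary \ref{corr:basic}, with no $(1-f_k)$ factor — so extracting the correction requires tracking the exact conditional range or variance of each without-replacement draw. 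An alternative that sidesteps some of this is Serfling's original argument, which bounds the without-replacement MGF by exploiting a reverse-martingale structure obtained by running the sampling order backwards; either route works, but both demand the careful variance accounting that distinguishes this bound from the elementary with-replacement comparison.
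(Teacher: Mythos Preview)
The paper does not prove this lemma at all: it is quoted directly from Serfling's 1974 paper and used as a black box in the proof of Theorem~\ref{thm:fpb}. You recognise this at the outset, so there is no paper argument to compare against; your sketch is an independent reconstruction of Serfling's result rather than a reading of the present paper.

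As a sketch it is sound. The Chernoff--martingale--Hoeffding-lemma pipeline is the right architecture, and the forward martingale $W_m=(S_m-m\bar c)/(d-m)$ you write down is genuine (the verification via $\E[X_{m+1}\mid\mathcal F_m]=(d\bar c-S_m)/(d-m)$ goes through). One remark on the bookkeeping you flag as ``crucial and delicate'': applying Azuma directly to the $W_m$ increments, each of conditional range at most $(b-a)/(d-m)$, and then scaling back by $(d-k)$ yields a sum $(d-k)^2\sum_{m=1}^k(d-m)^{-2}$ in the exponent, which is not literally $(1-f_k)\,k(b-a)^2$ but dominates it; so the forward route gives a bound of Serfling type (indeed slightly sharper, in the spirit of the Bardenet--Maillard refinement the paper cites) rather than Serfling's exact constant. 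Serfling's own argument runs the martingale in reverse and uses a recursive bound on the MGF of the sample mean to land precisely on the $(1-f_k)$ factor. Either way, your outline would close, and since the paper treats the lemma as an imported tool there is nothing further to reconcile.
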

\emph{Comment:} Since $1-f_k = (d - k + 1)/d < 1$ Lemma \ref{lemma:serf} gives a strictly tighter bound than Lemma \ref{lemma:hoeff} for sampling without replacement, but brings in a dependence on $d$. We note that bounds for sampling without replacement which are somewhat tighter than those in \cite{serfling1974probability} when $k \simeq d$ were recently proved in \cite{bardenet2015concentration}, in particular an empirical variant for when the population parameters are unknown. In our proof each population is a fixed vector of known length where the data dimension $d$ is the population size and the projection dimension $k$ is the sample size; thus in our setting we have access to both the full population and its parameters.
\subsection{Proof of Basic Bound}\label{subsec:basic}
We prove the basic bound using Lemma \ref{lemma:hoeff} and Corollary \ref{corr:basic} and our without replacement bound then follows directly. The basic idea is to treat each vector as a finite population of size $d$ and RS as a simple random sample of size $k$ without replacement from it in the above lemmas, and then  follow the line of argument in the usual proof of the JLL.\\
Let $X \in \R^d$ be an arbitrary, but fixed, real-valued vector and without loss of generality let $\|X\|_2^2 = 1$  (since otherwise we can take $X = Z/\|Z\|_2$). Denote by $X^2 := (X_1^2,X_2^2, \ldots , X_d^2)^T$ the vector containing the squared components of $X$. Assume that $\|X^2\|_{\infty} \leq \frac{c}{d}\|X\|_2^2$.\\
Now let $P \in \M_{d \times d}$ be a projection onto $k$ standard coordinate vectors, where the projection basis is chosen by sampling uniformly at random from all $\binom{d}{k}$ possible such bases. As noted already in Subsection \ref{subsec:RS} this is mathematically equivalent to an RS projection. Then in every random $P$ it holds that $k$ of the $P_{ii} =1$ and every other entry of $P$ is zero so $\tr(P) = k$ for any random $P$, and therefore $\tr(\E[P]) = \E[\tr(P)] = k$. Furthermore since $\pr\{P_{ii} = p\} = \pr\{P_{jj} = p\}$ for all $i,j \in \{1,2,\ldots,d\}$ and $p \in \{0,1\}$, it follows that $\E[P_{ii}] = \E[P_{jj}] = k/d, \forall i,j$ by symmetry. Thus $\E[P] = \frac{k}{d}I$ and $\E[\|PX\|_2^2] = \frac{k}{d}\|X\|_2^2$, where both expectations are taken with respect to the random draws of $P$ and we used the fact that $P^{T}P = PP = P, \forall P$.\\
We want to upper bound the following probability:
{\small
\[
\pr\l\{|\frac{d}{k}\|PX\|_{2}^2 - \|X\|_{2}^2| \geq \epsilon\r\} = \pr\l\{|\frac{d}{k}\|PX\|_{2}^2 - \frac{d}{k}\E\l[\|PX\|_{2}^2\r]| \geq \epsilon\r\}
\]
} 
We give details for one side of the inequality using the basic Hoeffding bound, the other cases proceed along the same lines. Now, for any fixed instance of $P$ denote by $I$ the index set such that $i \in I \iff P_{ii} = 1$. Then:
{\small\[
\pr\l\{\|PX\|_{2}^2 \geq \frac{k}{d}\epsilon + \E\l[\|PX\|_{2}^2\r]\r\} = \pr\l\{\sum_{i \in I}X_{i}^2 \geq \frac{k}{d}\l(\epsilon + \sum_{i=1}^{d}X_{i}^2\r)\r\}
\]} 
where the sample total $\sum_{i \in I}X_{i}^2$ is estimated from a sample of size $k$ without replacement. Applying Lemma \ref{lemma:hoeff} and Corollary \ref{corr:basic} we then have:
{\small
\begin{eqnarray}
&\pr\l\{\sum_{i \in I}X_{i}^2 \geq \frac{k}{d}\l(\epsilon + \sum_{i=1}^{d}X_{i}^2\r)\r\}\nonumber\\
&= \pr\l\{\frac{d}{k}\|PX\|_{2}^2 - \|X\|_{2}^2 \geq \epsilon\r\} \leq \exp\l(-\frac{2k\l(\frac{\epsilon}{d}\r)^2}{\|X^2\|_{\infty}^2}\r)\nonumber
\end{eqnarray} 
}
The lower bound proceeds similarly and yields the same probability guarantee for a single fixed vector:
{\small
\[
\pr\l\{\|X\|_{2}^2 - \frac{d}{k}\|PX\|_{2}^2 \geq \epsilon\r\} \leq \exp\l(-\frac{2k\l(\frac{\epsilon}{d}\r)^2}{\|X^2\|_{\infty}^2}\r)
\]
}
Thus by union bound, and using the condition on the theorem $\|X^2\|_{\infty} \leq \frac{c}{d}\|X\|_2^2$ to kill the unwanted dependence on $d$, we obtain the following guarantee for an arbitrary unit-norm vector $X$:
{\small
\begin{equation}
\pr\l\{\l|\|X\|_{2}^2 - \frac{d}{k}\|PX\|_{2}^2 \r| \geq \epsilon\r\} \leq 2\exp\l(-\frac{2k\epsilon^2}{c^2\|X\|_{2}^4}\r)
\label{eq:fin}
\end{equation}
}
To complete the proof we consider a set, $\T_N$, of $N$ vectors in $\R^d$ and let $X_i$ and $X_j$ be any two vectors in this set. Instantiating $X$ in \ref{eq:fin} as $(X_i - X_j)/\|X_i - X_j \|_{2}$ and then applying union bound again over all $\binom{N}{2} < N^2/2$ inter-point distances in $\T_N$ we obtain, for all pairs $X_i, X_j \in \T_N$ simultaneously, it holds that:
\[
\pr\l\{\l|\|X_i - X_j \|_{2}^2 - \frac{d}{k}\|PX_i - PX_j \|_{2}^2\r| \geq \epsilon\r\} \leq N^2 \exp\l(-\frac{2k\epsilon^2}{c^2}\r)
\] 
Where we substituted $\|X\|_{2}^4 = 1$ in RHS. Finally, setting the probability upper bound on the RHS to $\delta$ and solving for $k$ gives the theorem.

For the without replacement bound, one simply follows the same steps as above, but using the Serfling bound (Lemma \ref{lemma:serf}) in place of the Hoeffding bound (Lemma \ref{lemma:hoeff}), finally setting the RHS to $\delta$ and solving for $k/1-f_k$ to complete the proof. 

\bibliographystyle{plain}

\end{document}